\newtheorem{theorem}{Theorem}[section]
\DeclarePairedDelimiter{\ceil}{\lceil}{\rceil}
\title{Decentralized Multi-target Tracking with Multiple Quadrotors using a PHD Filter}
\author{Aniket Shirsat\footnote{Ph.D. student, Mechanical and Aerospace Engineering, Arizona State University, Tempe, AZ 85287.} and Spring Berman.\footnote{Associate Professor, Mechanical and Aerospace Engineering, Arizona State University, Tempe, AZ 85287.}}
\affil{Arizona State University, Tempe, AZ 85287}
\begin{document}

\maketitle
\begin{abstract}
We consider a scenario in which a group of quadrotors is tasked at tracking multiple stationary targets in an unknown, bounded environment. The quadrotors search for targets along a spatial grid overlaid on the environment while performing a random walk on this grid modeled by a discrete-time discrete-state (DTDS) Markov chain. The quadrotors can transmit their estimates of the target locations to other quadrotors that occupy their current location on the grid; thus, their communication network is time-varying and not necessarily connected. We model the search procedure as a renewal-reward process on the underlying DTDS Markov chain. To accommodate changes in the set of targets observed by each quadrotor as it explores the environment, along with uncertainties in the quadrotors' measurements of the targets, we formulate the tracking problem in terms of Random Finite Sets (RFS). The quadrotors use RFS-based Probability Hypothesis Density (PHD) filters to estimate the number of targets and their locations. We present a theoretical estimation framework, based on the Gaussian Mixture formulation of the PHD filter, and preliminary simulation results toward extending existing approaches for RFS-based multi-target tracking to a decentralized multi-robot strategy for multi-target tracking. We validate this approach with simulations of multi-target tracking scenarios with different densities of robots and targets, and we evaluate the average time required for the robots in each scenario to reach agreement on a common set of targets.
\end{abstract}

\section{Introduction}
 Mobile ground robots \cite{buhmann1995mobile} and aerial robots \cite{achtelik2009autonomous} have often been used for exploration and mapping tasks. Heterogeneous teams of ground and aerial robots have been employed for applications that involve mapping an environment, such as disaster response \cite{michael2014collaborative,burgard2005coordinated} and surveillance \cite{grocholsky2006cooperative}. Such tasks require the robots to track features of interest that are present in the environment. Mobile robots, especially quadrotors, are subject to limitations on their operation due to constraints on the payloads that they can carry, including power, sensing and communication devices for transmitting information to other robots and/or to a command center. Many multi-robot control strategies rely on a centralized communication network for coordination. For example, some multi-robot exploration strategies, e.g. \cite{simmons2000coordination}, rely on constant two-way communication between the robots and a central node. Since a centralized communication architecture  is required, these strategies  do not scale well with the robot population size, as the communication bandwidth becomes a bottleneck with increasing numbers of robots. Moreover, a failure of the central node causes loss of communication for all the robots. Decentralized multi-robot control strategies can be used to  overcome these limitations. Such strategies involve only local communication between robots and scale well with the number of robots. However, communication among robots can become unreliable as the number of robots increases \cite{howard2006experiments}, and the communication network connectivity may be disrupted by the environment \cite{husain2013mapping} or by the movement of robots beyond communication range. 

Multi-target tracking is an established field of research with origins in the study of point processes \cite{daley2007introduction}, with most early applications in radar and sonar based tracking. In real-world scenarios, there is often uncertainty in the existence, locations, and dynamics of targets, as well as uncertainty in sensor measurements of targets that arise from sensor noise and false detections (clutter) around the real targets. Random Finite Set (RFS) models provide a probabilistic framework for multi-target tracking that can account for these uncertainties and ensure statistical guarantees on the accuracy of the estimated number of targets and their states.
Unlike RFS-based estimators, many classical probabilistic multi-target tracking approaches require techniques for data association, which is computationally intensive. Such approaches include multiple hypothesis tracking \cite{jensfelt2001active, reid1979algorithm}, in which an exhaustive search on all possible combinations of tracks and data associations is performed, and joint probabilistic data association \cite{bar1990tracking,schulz2001tracking}. The papers \cite{mahler2001random,mahler2003multitarget,mahler2007statistical} are foundational works on estimation methods based on Random Finite Sets, and they have made concepts from point process theory for multi-target tracking more accessible to engineering disciplines. The paper \cite{clark2006gm} presents an implementation of a multi-target  tracking approach as a probability hypothesis density (PHD) filter, and \cite{vo2006gaussian} provides examples of scenarios with targets that follow either linear or nonlinear motion models. In \cite{sung2017algorithm}, a PHD filter-based approach is proposed for simultaneous search  and tracking of an unknown number of moving targets by a single robot with a limited sensing field of view. There have also been works on multi-target search and tracking using multi-robot systems with communication networks that are always connected \cite{kamath2007triangulation,dames2013cooperative,dames2019distributed}  and that use decentralized controllers to maintain connectivity \cite{hung2016scalable}. In real-world applications, constraints on the robots' communication ranges limit the area that the robots 
can explore. Our previous work \cite{shirsat2020multi} presents a  consensus-based target search strategy for multiple robots that does not require the robots to maintain a connected communication network. The robots search for a static target while performing random walks on a discretization of the environment according to a discrete-time discrete-state (DTDS) Markov chain model. The strategy is proved to achieve consensus, defined as the robots' agreement on the presence of the target. However, the work \cite{shirsat2020multi} does not consider scenarios with multiple targets in the environment. 

This paper addresses the problem of tracking multiple targets without requiring a connected communication network using a multi-robot search strategy with the same probabilistic motion model and inter-robot communication constraints as in \cite{shirsat2020multi}.
We consider scenarios in which the quadrotors move according to a DTDS Markov chain model on a finite spatial grid, as illustrated in \autoref{fig:MotionStrategy_Visual}, while searching for multiple stationary targets. A robot detects the presence of a target by obtaining sensor measurements of target's states. We assume that the robots' sensors have limited fields of view and that the robots share information about the targets only with other robots within their local communication range.
We model the multi-robot target tracking procedure as a {\it renewal-reward process}, in which the reward is defined as the estimated number of targets and the targets' states, such as their spatial locations. 
The main contributions of this paper are the following:

\begin{enumerate}
\item 
Given a group of robots that explore an unknown bounded environment according to a DTDS Markov motion model, we prove that the number of a robot's encounters with any other robot up to a specific time characterizes a renewal process.
\item Using the Gaussian Mixture approximation of the PHD filter proposed in \cite{vo2006gaussian}, we show that under the constraint of local inter-robot communication, all robots eventually track the number of targets in the environment and their estimated states. 
\end{enumerate}

\begin{figure}[t]
    \centering
        \begin{tikzpicture}[scale=0.8]
            \draw [step=1cm, thin, gray!80] (0,0) grid (10,10);
            \filldraw[color=red!60!green] (3-0.2,5-0.2) rectangle (3+0.2,5+0.2);
            \filldraw[color=blue!50!green] (7-0.2,5-0.2) rectangle (7+0.2,5+0.2);
            \filldraw[color=red!40!green] (5-0.2,7-0.2) rectangle (5+0.2,7+0.2);
            \filldraw[color=blue!30!green] (5-0.2,3-0.2) rectangle (5+0.2,3+0.2);
            \filldraw[color=red!20!green] (3-0.2,4-0.2) rectangle (3+0.2,4+0.2);
            \filldraw[color=blue!10!green] (7-0.2,4-0.2) rectangle (7+0.2,4+0.2);
            \filldraw[color=black]  (5,5) circle (0.1);
            \node[right] (i) at (4.0,4.5) {$m=Y^1_k=Y^2_k$};
	        \node [quadcopter top,fill=white,draw=cyan,minimum width=0.5cm, rotate=45] at (2.6,1.0) {};
	        \filldraw[color=cyan] (2,1) circle (0.1);
	        \node[above] at (1.5,0.0) {$Y^1_0=i$}; 
            \node [quadcopter top,fill=white,draw=orange,minimum width=0.5cm, rotate=45] at (5,8.6) {};
            \filldraw[color=orange] (5,8) circle (0.1);
        	\node[above] at (4.1,7.3) {$Y^2_0 = j$};
        	\draw[->,color=cyan, thick,dashed] (2,1) -- (1,1);
        	\draw[->,color=cyan, thick,dashed] (1,1) -- (1,2);
        	\draw[->,color=cyan, thick,dashed] (1,2) -- (2,2); 
        	\draw[->,color=cyan, thick,dashed] (2,2) -- (2,3);
        	\draw[->,color=cyan, thick,dashed] (2,3) -- (2,4);
        	\draw[->,color=cyan, thick,dashed] (2,4) -- (3,4);
        	\draw[->,color=cyan, thick,dashed] (3,4) -- (4,4);
        	\draw[->,color=cyan, thick,dashed] (4,4) -- (4,5);
        	\draw[->,color=cyan, thick,dashed] (4,5) -- (5,5);
        	\draw[->,color=orange, thick,dashed] (5,8) -- (6,8);
        	\draw[->,color=orange, thick,dashed] (6,8) -- (7,8);
        	\draw[->,color=orange, thick,dashed] (7,8) -- (8,8);
        	\draw[->,color=orange, thick,dashed] (8,8) -- (8,7);
        	\draw[->,color=orange, thick,dashed] (8,7) -- (7,7);
        	\draw[->,color=orange, thick,dashed] (7,7) -- (7,6);
        	\draw[->,color=orange, thick,dashed] (7,6) -- (6,6);
        	\draw[->,color=orange, thick,dashed] (6,6) -- (6,5);
        	\draw[->,color=orange, thick,dashed] (6,5) -- (5,5); 
        	\node [quadcopter top,fill=white,draw=black, minimum width=1.0cm, rotate=45] at (8,2) {};
            \draw [->, color=black, thick] (8,3) -- node[right] {up} (8,4);
            \draw [->, color=black, thick] (7,2) -- node[below]{left} (6,2);
            \draw [->, color=black, thick] (8,1) -- node[right]{down} (8,0);
            \draw [->, color=black, thick] (9,2) -- node[below]{right} (10,2);
        \end{tikzpicture}
        \caption{Illustration of our multi-robot multi-target tracking strategy, showing sample paths for two quadrotors (orange and blue) on a square grid. The quadrotors search the environment for a set of static targets, represented by the squares, as they perform a random walk on the grid.} 
        \label{fig:MotionStrategy_Visual}
        \vspace*{-3mm}
\end{figure}

The remainder of the paper is organized as follows. Section \ref{sec:ProbForm} presents the problem statement. Section \ref{sec:DTDS_Markov_Chain} discusses the DTDS Markov chain motion model that the robots follow.
We then provide a brief discussion on the theory of renewal-reward processes and their application to our problem in Section \ref{sec:Renew-Reward}. In Section \ref{sec:RFSPHD}, we present the Random Finite Set formulation and its first-order moment, the Probability Hypothesis Density (PHD) filter, as an estimation framework for detecting and tracking multiple targets. In Section \ref{subsec:GMPHD Filter}, we describe the Gaussian Mixture approximation of the PHD filter from \cite{vo2006gaussian}. We then validate our strategy with simulations in Section \ref{sec:SimResults} and finally state our conclusions and future work in Section \ref{sec:Conc}.

\section{Problem Formulation}\label{sec:ProbForm}
We consider an unknown, bounded environment that contains a finite, non-zero number of static targets, indexed by the set $\mathcal{I} \subset \mathbb{Z}_{+}$. The environment is discretized into a square grid, and the four vertices of each grid cell are referred to as {\it nodes}. Let $\mathcal{S} \subset \mathbb{Z}_{+}$ denote the set of $S$ nodes, and let $\mathcal{G}_s = (\mathcal{V}_s, \mathcal{E}_s)$ be an undirected graph associated with this grid, where $\mathcal{V}_s$ is the set of nodes and $\mathcal{E}_s$ is the set of edges $(i,j)$ that signify the pairs of nodes $i,j \in \mathcal{V}_s$ between which the quadrotors can travel. A group of $N$ quadrotors, indexed by the set $\mathcal{N}$, explores the environment using a random walk strategy: each quadrotor performs a random walk on the grid, moving from its current node $i$ to an adjacent node $j$ with transition probability $p_{ij}$ at each discrete time $k$. We assume that each quadrotor is able to localize itself in this environment; i.e., that it knows which node it currently occupies. We also assume that quadrotors can communicate with one another only if they occupy the same node. We also assume that the quadrotors have perfect localization.

The number of targets estimated by each quadrotor is updated at every time step $k$. Let the $i^{th}$ target detected by quadrotor $a_j$ at time $k$ be $m_{i,k}^{a_j} \in \mathbb{R}_{+}$ which is a tuple, composed of the {\it state} of the target, which is a  time-varying property of the target like its location within the quadrotor's field of view (FoV), the pixels that it occupies in the quadrotor's camera image, and a unique identification label. Let $\mathcal{M}^{a_i}_k = \{ m_{1,k}^{a_i}, \ldots, m_{n_m,k}^{a_i}\}$ be the set of states of all targets detected by the quadrotor $a_j$ at time $k$, where $n_m$ is the maximum number of features that a quadrotor can detect simultaneously. The value of $n_m$ is limited by the computational capabilities and the available memory on the robot. As the quadrotor explores the environment, the number of targets that it detects and their states vary, as new targets appear in the FoV of the quadrotor and existing targets disappear. An {\it observation set} obtained by a quadrotor at a particular time consists of both measurements that are associated with actual targets and measurements arising from clutter. The objective of multi-target tracking is to jointly estimate, at each time step, the number of targets and the targets' states from a series of noisy and cluttered observation sets. The concept of a {\it random finite set (RFS)} is useful for formulating this problem, since within the FoV of a quadrotor, the number of targets and their states are time-varying and not completely known. A random finite set, as defined in \cite{mahler2007statistical}, is a set with a random number of elements which are themselves random. In other words, a RFS is a random variable whose possible values are unordered finite sets.
A computationally tractable approach to set-based estimation is to utilize the first statistical moment of an RFS, known as the Probability Hypothesis Density (PHD) or its \textit{intensity function}, for multi-target tracking. We propose to use the Gaussian Mixture formulation of the PHD filter (GM-PHD) for each quadrotor, as it is less computationally expensive than the particle filter implementation. 

\autoref{fig:MotionStrategy_Visual} illustrates our multi-target tracking approach with two quadrotors and six stationary targets. The quadrotors explore the grid according to the random walk motion model defined in Section \ref{sec:DTDS_Markov_Chain}, and they estimate the number of targets and their positions within their limited sensing FoV using the GM-PHD filter described in Section \ref{subsec:GMPHD Filter}. Sample trajectories are shown for each quadrotor as a sequence of arrows that indicate its direction of motion. At time step $k$, the first {\it renewal epoch}, the quadrotors meet at node $m$ and exchange {\it rewards}, defined as each quadrotor's estimates of the number of targets that it has detected up until time $k$ and their positions, as described in Section \ref{sec:Renew-Reward}. The implementation of this strategy is described in pseudocode in Algorithm \ref{algo:GMPHD_with_DTDS_Markov_chain} and Algorithm \ref{algo:RenewalReward}. We extract only unique target states during simulation by using set union methods, as described in Algorithm \ref{algo:UniqueSetTuplesConsensus}.

\begin{algorithm}[h]
\SetAlgoLined
\DontPrintSemicolon
\SetNoFillComment
\textbf{Step 0: Initialization}
$a_{i}$, $J_{\gamma}^{(a_{i})},$ $\mu_{\gamma}^{(a_{i})},$ $P_{\gamma}^{(a_{i})},$  $Y_0^{(a_{i})},$ $F_{k-1},$ $Q_{k-1},$ $H_k,$ $R_k,$ $\kappa^{(a_{i})}_{0}(z),$  $p_{\mathbf{S}},$  $p_{\mathbf{D}},$  $w^{(a_{i})}_{0},$ $\mathcal{M}^{(\cdot)}_{0}$\\
\textbf{Step 1: Random Walk}\\
~~[$Y^{(a_{i})}_{k}$] = MarkovRandomWalk($Y^{(a_{i})}_{k-1}$);\\
\textbf{Step 2: GM-PHD Filter}
\begin{enumerate}
    \item[{a}] {\underline{\textit{Predicted State Components}} Apply steps 1 and 2 from Table 1 in \cite{vo2006gaussian}}
    \item[] { $J^{(a_{i})}_{k-1}$=$J^{(a_{i})}_{\gamma}$;~ $w^{(a_{i})}_{k-1}$=$w^{(a_{i})}_{\gamma}$;~ $\mu^{(a_{i})}_{k-1}$ = $\mu^{(a_{i})}_{\gamma}$}
    \item[] {[$w^{(a_{i})}_{k|k-1}$, $\mu^{(a_{i})}_{k|k-1}$, $P^{(a_{i})}_{k|k-1}$,$J^{(a_{i})}_{k|k-1}$] = predictGMPHD($J_{k-1}$,$w^{(a_{i})}_{k-1}$,$\mu^{(a_{i})}_{k-1}$,$P^{(a_{i})}_{k-1}$,$F_{k-1}$,$Q_{k-1}$,$p_{\mathbf{S}}$);}
    \item[{b}] {\underline{\textit{Updated State Components}} Apply steps 3 and 4 from Table 1 in \cite{vo2006gaussian}}
    \item[] {[$w_k$,$\mu_k$,$P_k$,$J_k$] =updateGMPHD($H_k$,$\mu_{k|k-1}$,$R_k$,$P_{k|k-1}$,$p_{\mathbf{D}}$,$J_{k|k-1}$,$w_{k|k-1}$, $\kappa_{k}^{(a_i)}(z)$,$Z_k^{(a_i)}$);} 
    \item[{c}] {\underline{\textit{Pruning and Merging Components}} Apply all steps from Table 2 in \cite{vo2006gaussian}}
    \item[{d}] {\underline{\textit{Multi-target State Extraction}} Apply all steps from Table 3 in \cite{vo2006gaussian} with $thresh_{state} = 0.5$}
    \item[] {[$\hat{X}_{k}^{(a_i)}$,$\hat{w}_{k}^{(a_i)}$,$\hat{P}_{k}^{(a_i)}$] = extractMTStateGMPHD($w_k^{(a_i)}$,$\mu_k^{(a_i)}$,$P_k^{(a_i)}$,$thresh_{state}$);}
    \item[]{$\mathcal{M}^{(a_i)}_k$ = $\mathcal{M}^{(a_i)}_{k-1} \bigcup \hat{X}_{k}^{(a_i)}$};
\end{enumerate}
\caption{Control strategy for robot $a_{i} \in \mathcal{N}$}
\label{algo:GMPHD_with_DTDS_Markov_chain}
\end{algorithm}

\section{Discrete-Time Discrete-Space (DTDS) Markov Chain Model of Robot Motion }\label{sec:DTDS_Markov_Chain}
Let $Y^{a_i}_{k} \in \mathcal{S}$ be the random variable that represents the location of quadrotor $a_i$ at time $k$ on the spatial grid. For each quadrotor $a_i$, the probability mass function $\pi_k \in \mathbb{R}^{1 \times S}$ of $Y^{a_i}_k$ evolves according to a discrete-time discrete-space (DTDS) Markov chain given by:
  \begin{equation}
      \pi_{k+1} = \mathbf{P} \pi_k,
      \label{eqn:DTDSMarkovChain}
  \end{equation}
where the \textit{state transition matrix} $\mathbf{P} \in \mathbb{R}^{S \times S}$ has elements $p_{ij} \in [0,1]$ at row $i$ and column $j$. The time evolution of the  probability mass function of $Y^{a_i}_k$ is expressed using the Markov property as follows: 
\begin{equation}
    Pr(Y^{a_i}_{k+1}=j_{k+1} | Y^{a_i}_{k}=j_{k},\ldots, Y^{a_i}_{0}=j_{0}) ~~=~~ Pr (Y^{a_i}_{k+1} = j_{k+1} | Y^{a_i}_{k} = j_{k} ),
    \label{eqn:MarkovProperty}
\end{equation}
where $j_k$ is a specific node in the spatial grid that the quadrotor may occupy at time $k$.
In other words, \eqref{eqn:MarkovProperty} states that the future location of the quadrotor depends only on its current location and is statistically independent of any previous locations. We assume that the DTDS Markov chain is time-homogeneous, which implies that  $Pr(Y^{a_i}_{k+1} = j_{k+1} | Y^{a_i}_{k} = j_{k})$ is same for all quadrotors at all time steps. Thus, the entries of $\mathbf{P}$ can be defined as follows: 
\begin{equation}
    p_{ij} = Pr(Y^{a_i}_{k+1} = j_{k+1} | Y^{a_i}_{k} = j_{k}), \hspace{2mm}\forall j_{k} \in S,~ k \in \mathbb{Z}_{\geq 0},~ a_i \in \mathcal{N}. 
\end{equation}
Assuming that each quadrotor chooses its next position from a uniform random distribution, we can compute the entries of $\mathbf{P}$ as follows:
\begin{equation}
         p_{ij} =\begin{cases} 
         \frac{1}{d_{i}+1}, & (i,j) \in \mathcal{E}_{s}, \\
          0, & $otherwise$, 
    \end{cases}
    \label{eqn:TransitionMat_Elements}
\end{equation}
where $d_{i}$ is the degree of the node $i \in \mathcal{S}$. Since each entry $p_{ij} \geq 0$, we use the notation $\mathbf{P} \geq 0$. We see that $\mathbf{P}^{m} \geq 0$ for $m \geq 1$. Hence, $\mathbf{P}$ is a \textit{non-negative matrix}. 
Then, from Theorem 5 in \cite{grimmett2001probability}, we can say that $\mathbf{P}$ is a stochastic matrix.
We define \autoref{eqn:DTDSMarkovChain} as the {\it spatial Markov chain}. From the construction of the spatial Markov chain, every quadrotor has a positive probability of moving from node $i \in S$ to any node $j \in S$ in a finite number of time steps. Thus, the Markov chain is said to be  \textit{irreducible}, and consequently, $\mathbf{P}$ is an \textit{irreducible} matrix. Now applying Lemma 8.4.4 in \cite{horn1990matrix}, we know that there exists a real unique positive left eigenvector of $\mathbf{P}$. Since $\mathbf{P}$ is a stochastic matrix, we have that $\rho(\mathbf{P}) = 1$, where $\rho(\mathbf{P})$ denotes the spectral radius of $\mathbf{P}$. Thus, we can conclude that this real unique positive left eigenvector is the \textit{stationary distribution} associated with the spatial Markov chain. 
Since we have shown that the Markov chain is irreducible and has a stationary distribution $\pi$ that satisfies $\pi \mathbf{P} = \pi$, we can conclude from Theorem 21.12 in \cite{levin2017markov} that the Markov chain is {\it positive recurrent}. 
Thus, all states in the Markov chain are positive recurrent, which implies that each quadrotor will keep visiting every state on the finite spatial grid infinitely often. We will use this result to prove results on the associated renewal-reward process, which is discussed next.

\section{Renewal-Reward Process}\label{sec:Renew-Reward}
We now define a random variable  $\tau^{a_i}_{j} \in \mathbb{R}_{\geq 0}$ as the $j^{th}$ interval between two successive times at which quadrotor $a_i$ and any another quadrotor occupy the same node. This time interval is referred to as the {\it inter-arrival time}. A {\it renewal epoch} is a time instant at which two quadrotors meet at the same node. For each quadrotor $a_i$, we define the counting process $T^{a_i}(k) \in \mathbb{Z}_{\geq 0}$ as the number of times $a_i$ has met any other quadrotor by time $k$. At each renewal epoch, quadrotor $a_i$ updates its {\it reward},  defined as the number of all detected targets and their locations, with the number of targets and locations detected by the quadrotor(s) that occupies its current node and transmits this information to $a_i$.
We use the definition of a {\it renewal process} given by Definition 7.1 in \cite{ross2014introduction}. If the sequence of non-negative random variables $\{ \tau^{a_i}_{0}, \tau^{a_i}_{1}, \ldots, \}$ is independent and identically distributed, then the counting process $T^{a_i}(k)$ is said to be a renewal process. We demonstrate that $T^{a_i}(k)$ is a renewal process at the end of this section. 

For a renewal process having inter-arrival times  $\tau^{a_i}_{0}, \tau^{a_i}_{1}, \ldots $, we define $S_n^{a_i} = \sum_{j=1}^{n} \tau^{a_i}_{j}$ as the $n^{th}$ renewal epoch, with $S_{0}^{a_i} = 0$ for all $a_{i} \in \mathcal{N}$. From the definition of a renewal process, we can infer that the number of renewal epochs by time $k$ is greater than or equal to $n$ if and only if the $n^{th}$ renewal epoch occurs before or at time $k$; that is,
\begin{equation}
    T^{a_i}(k) \geq n  ~~\Leftrightarrow~~ S_n^{a_i} \leq k.
    \label{eqn:renewaltointerarrivalreln}
\end{equation}
Now consider that at each renewal epoch, quadrotor $a_i$ receives a reward. The reward $R^{a_i}_{n}$  earned by quadrotor $a_i$ when the $n^{th}$ renewal occurs is defined as follows: 
\begin{equation}
    R_n^{a_i} = \mathcal{M}^{a_i}_{k} \bigcup_{a_j \neq a_i}  \mathcal{M}^{a_j}_{k}, ~~Y^{a_i}_k = Y^{a_j}_k ~\text{and}~ a_j \in \mathcal{N}.\\
    \label{eqn:MeetingReward}
\end{equation}
\autoref{eqn:renewaltointerarrivalreln} and \autoref{eqn:MeetingReward} together define a \textit{renewal-reward process.} Each quadrotor $a_i$ calculates $\mathcal{M}^{a_i}_k$ by estimating the number of targets and their spatial distribution using a PHD filter. In Section \ref{sec:RFSPHD}, we describe some fundamental theory on target detection and tracking using this type of filter.

Given the quadrotor motion model defined in Section \ref{sec:DTDS_Markov_Chain}, we can model the dynamics of all the quadrotors' movements on the spatial grid by a composite Markov chain with states $\psi_k = (Y_k^{a_1}, Y_k^{a_2}, \ldots, Y_k^{a_N}) \in \mathcal{H}$, where $\mathcal{H} = \mathcal{S}^{\mathcal{N}}$. Note that $S = |\mathcal{S}|$ and $|\mathcal{H}| = S^N$. We now define another undirected graph $\hat{\mathcal{G}} = (\hat{\mathcal{V}},\hat{\mathcal{E}})$ associated with this composite Markov chain. The vertex set $\hat{\mathcal{V}}$ is a set of all possible realizations $\hat{\imath} \in \mathcal{H}$ of $\mathbf{\psi}_k$. Here $\hat{\imath}(a_{l})$ represents the $a_{l}^{th}$ entry of $\hat{\imath}$, which corresponds to the spatial node $i \in \mathcal{S}$ occupied by robot $a_{l} \in \mathcal{N}$ and $l \in \mathcal{I}$. We define the edge set $\hat{\mathcal{E}}$ of graph $\hat{\mathcal{G}}$ as follows:
$(\hat{\imath},\hat{\jmath}) \in \hat{\mathcal{E}}$ if and only if $(\hat{\imath}(a_{l}),\hat{\jmath}(a_{l})) \in \mathcal{E}_s$ for all robots $a_{l} \in \mathcal{N}$. Let $\mathbf{Q} \in \mathbb{R}^{ | \mathcal{H}| \times |\mathcal{H}|}$ be the state transition matrix associated with this composite Markov chain. An element of $\mathbf{Q}$, denoted by $q_{\hat{\imath} \hat{\jmath}}$, is the probability that in the next time step, each robot $a$ will move from spatial node $\hat{\imath}(a_{l})$ to node $\hat{\jmath}(a_{l})$. These elements are computed from the transition probabilities defined by Equation \eqref{eqn:TransitionMat_Elements} as follows: 
\begin{equation}
    q_{\hat{\imath} \hat{\jmath}} = \prod_{a_{l}=1}^{N} p_{\hat{\imath}(a_{l}) \hat{\jmath}(a_{l})}, ~~~~ \forall \hat{\imath}, \hat{\jmath} \in \mathcal{H}~~ \& ~~ l \in \mathcal{I}. 
    \label{eqn:TransitionRatesProductState}
\end{equation}

As an illustration, consider a set of two robots, $\mathcal{N} = \{a_1,a_2\}$, that move on the graph $\mathcal{G}_s$  shown in \autoref{fig:MarkovianTransition}.
The robots can stay at their current node in the next time step or travel between nodes $i$ and $j$ and between nodes $j$ and $l$, but they cannot travel between nodes $i$ and $l$.  
\autoref{fig:Q_Mat_elem_Vis} shows a subset of the resulting composite graph $\mathcal{\hat{G}}$, which has the set of nodes 
$\hat{\mathcal{V}} = \{(i,i), (i,j), (i,l), (j,i), (j,j), (j,l), (l,i), (l,j), (l,l)\}$. Each node in $\hat{\mathcal{V}}$ is labeled by a single index $\hat{\imath}$, e.g., $\hat{\imath} = (i,j)$, with $\hat{\imath}(a_1) = i$ and $\hat{\imath}(a_2) = j$. Given the connectivity of the spatial grid defined by $\mathcal{E}_s$, we can for example identify $((i,j),(i,l))$ as an edge in $\hat{\mathcal{E}}$, but not $((i,j),(l,l))$.
Since $N = 2$ and $S = 3$, we have that $|\mathcal{H}| = 3^2 = 9$. For each $\hat{\imath},\hat{\jmath} \in \hat{\mathcal{V}}$,
we can compute the transition probabilities in $\mathbf{Q} \in \mathbb{R}^{9 \times 9}$ from Equation \eqref{eqn:TransitionRatesProductState} as:  
\begin{equation}
    q_{\hat{\imath} \hat{\jmath}} = Pr\left(\mathbf{\mathbf{\psi}}_{k+1} = \hat{\jmath} ~|~ \mathbf{\psi}_{k} = \hat{\imath} \right) = p_{\hat{\imath}(a_1)\hat{\jmath}(a_1)}p_{\hat{\imath}(a_2)\hat{\jmath}(a_2)}, ~~ k \in \mathbb{Z}_+.
     \label{eqn:ProductSpaceTransitionProb}
\end{equation}

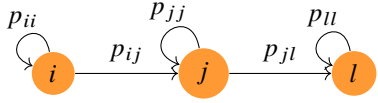
\begin{figure}[t]
    \centering
    \begin{tikzpicture}
        \node[style={circle,fill=orange!80}] (a1) at (0,0) {$i$};
        \node[style={circle,fill=orange!80}] (a2) at (2,0) {$j$};
        \node[style={circle,fill=orange!80}] (a3) at (4,0) {$l$};
        \node[] (a4) at (5,0){};
        \draw[->,style={draw=black}] (a1) -- node[above]{$p_{i j}$} (a2);
        \draw[->,style={draw=black}] (a2) -- node[above]{$p_{j l}$} (a3);
        \draw[->,style={draw=black}] (a1) to [in=160, out=100,looseness=5] node[above] {$p_{i i}$} (a1);
        \draw[->,style={draw=black}] (a2) to [in=160, out=100,looseness=5] node[above] {$p_{j j}$} (a2);
        \draw[->,style={draw=black}] (a3) to [in=160, out=100,looseness=5] node[above] {$p_{l l}$} (a3);
    \end{tikzpicture}
    \caption{An example graph $\mathcal{G}_s =  (\mathcal{V}_s,\mathcal{E}_s)$ defined on the set of spatial nodes  $\mathcal{V}_s = \{i,j,l\}$. The arrows signify directed edges between pairs of distinct nodes or self-edges. The edge set 
    is $\mathcal{E}_s = \{(i,i), (j,j), (l,l), (i,j), (j,l)\}$. }
    \label{fig:MarkovianTransition}
\end{figure} 
\begin{figure}[t]
    \centering
    \begin{tikzpicture}[scale=.5,auto=center]
        \node  [style={circle,fill=green!30!white}] (a1i1a2i1) at (0,0) {$(i,i)$};
        \node (ghatv1) at (0,1.7) {$\hat{i}$};
        \node [style={circle,fill=green!30!white}] (a1i1a2i2) at (5,0) {$(i,j)$};
        \node (ghatv2) at (5,1.7) {$\hat{j}$};
        \node  [style={circle,fill=green!30!white}] (a1i1a2i3) at (10,0) {$(i,l)$};
        \node (ghatv3) at (10,1.7) {$\hat{l}$}; 
        \node (ghatend) at (13,0){}; 
        \draw[->,style={draw=black}] (a1i1a2i1) -- node[above,midway]{$q_{\hat{i},\hat{j}}$} (a1i1a2i2);
        \draw[->,style={draw=black}] (a1i1a2i2) -- node[above,midway]{$q_{\hat{j},\hat{l}}$} (a1i1a2i3);
        \draw[style={draw=black, dashed}] (a1i1a2i3) -- (ghatend);
        \draw[->,style={draw=black}] (a1i1a2i1) to [loop, in=250, out=210,looseness=3] node[below] {$q_{\hat{i},\hat{i}}$} (a1i1a2i1);
        \draw[->,style={draw=black}] (a1i1a2i2) to [loop, in=250, out=210,looseness=3]
        node[below] {$q_{\hat{j},\hat{j}}$} (a1i1a2i2);
        \draw[->,style={draw=black}] (a1i1a2i3) to [loop, in=250, out=210,looseness=3] 
        node[below] {$q_{\hat{l},\hat{l}}$} (a1i1a2i3);
    \end{tikzpicture}
    \caption{A subset of the composite graph $\mathcal{\hat{G}}=(\mathcal{\hat{V}},\mathcal{\hat{E}})$ for two agents that 
    move on the graph $\mathcal{G}_s$ shown in \autoref{fig:MarkovianTransition}.}
    \label{fig:Q_Mat_elem_Vis}
\end{figure}
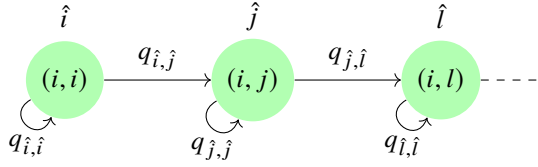

We now prove that $T^{a_i}(k)$ is a renewal process.
\begin{theorem}
$T^{a_i}(k)$ is a renewal process on the composite Markov chain $\mathbf{\psi}_k$.
\label{thm:Renewal_Thm_Composite_MC}
\end{theorem}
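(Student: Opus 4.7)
The plan is to reduce the claim to the definition of a renewal process recalled from \cite{ross2014introduction}: it suffices to show that the inter-arrival times $\{\tau^{a_i}_{0},\tau^{a_i}_{1},\ldots\}$ are i.i.d.\ and non-negative. Non-negativity is automatic from the definition, so the entire work is to extract the i.i.d.\ property from the dynamics of $\psi_k$.

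First I would verify that the composite chain $\psi_k$ on $\mathcal{H}$ with transition matrix $\mathbf{Q}$ inherits the good properties already proved for $\mathbf{P}$. Row-stochasticity of $\mathbf{Q}$ follows immediately from Equation \eqref{eqn:TransitionRatesProductState}, since for any $\hat{\imath}$, $\sum_{\hat{\jmath}} q_{\hat{\imath}\hat{\jmath}} = \prod_{a_l}\sum_{j}p_{\hat{\imath}(a_l)j}=1$. Irreducibility transfers because each robot can be routed independently between any pair of spatial nodes, and because $\mathbf{P}$ has self-loops (the $1/(d_i+1)$ in \eqref{eqn:TransitionMat_Elements} applies to $(i,i)$ as well), shorter individual paths can be padded so that all robots reach a prescribed configuration simultaneously with positive probability; the self-loops also give aperiodicity. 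Since $\mathcal{H}$ is finite, the same chain of implications already used for $\mathbf{P}$ (irreducibility plus finiteness, via Theorem 21.12 in \cite{levin2017markov}) yields positive recurrence of $\psi_k$.

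Next I would define the meeting set $\mathcal{D}=\{\hat{\imath}\in\mathcal{H}:\exists\, a_j\neq a_i~\text{with}~\hat{\imath}(a_i)=\hat{\imath}(a_j)\}$ and observe that the renewal epochs $S_n^{a_i}$ are exactly the successive hitting times of $\mathcal{D}$ by $\psi_k$. Positive recurrence guarantees that $\mathcal{D}$ is entered infinitely often almost surely, so $T^{a_i}(k)\to\infty$ a.s.\ and the sequence $\{\tau^{a_i}_j\}$ is well defined. To obtain the distributional part, I would apply the strong Markov property of $\psi_k$ at each stopping time $S_n^{a_i}$: conditional on $\psi_{S_n^{a_i}}$, the future $(\psi_{S_n^{a_i}+\ell})_{\ell\ge 0}$ is independent of the past and distributed as the chain started from $\psi_{S_n^{a_i}}$, so $\tau^{a_i}_{n+1}$ is independent of $(\tau^{a_i}_0,\ldots,\tau^{a_i}_n)$ given $\psi_{S_n^{a_i}}$.

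The one real obstacle is that $\mathcal{D}$ is a set rather than a single state, so the conditional distribution of $\tau^{a_i}_{n+1}$ a priori depends on which element of $\mathcal{D}$ was entered, which threatens the "identically distributed" part. I would address this in two steps. Restrict attention to a single distinguished regeneration state $\hat{\imath}^{\star}\in\mathcal{D}$ (for instance the configuration with all robots on a common fixed node $i^{\star}$); the times between successive visits to $\hat{\imath}^{\star}$ are classically i.i.d.\ by the textbook Markov renewal argument applied to the irreducible positive-recurrent chain $\psi_k$. Then use the symmetry of \eqref{eqn:TransitionMat_Elements}, in particular the permutation symmetry of $\mathbf{P}$ under relabeling of robots and the uniform-over-neighbors structure, to show that the law of the excursion length from any state in $\mathcal{D}$ to the next entry of $\mathcal{D}$ depends only on the multiset of robot positions and is in fact invariant under the induced transition kernel on $\mathcal{D}$ seen at meeting epochs. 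Combining the two yields that the $\tau^{a_i}_j$ are i.i.d., and therefore $T^{a_i}(k)$ satisfies Definition 7.1 of \cite{ross2014introduction}, proving the theorem.
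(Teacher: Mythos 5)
Your groundwork (stochasticity and irreducibility of $\mathbf{Q}$, positive recurrence of $\psi_k$ on the finite composite state space, and the identification of renewal epochs as hitting times of the meeting set $\mathcal{D}$) matches the paper's, and you have correctly put your finger on the step that actually carries the theorem: successive hitting times of a \emph{set} by a Markov chain are not automatically i.i.d., because the law of the next excursion depends on which state of $\mathcal{D}$ was just entered. The problem is that your proposed resolution of this obstacle does not hold up. Robot-relabeling symmetry and the uniform-over-neighbors rule \eqref{eqn:TransitionMat_Elements} do not make the excursion-length law the same from every state of $\mathcal{D}$: on a bounded grid the transition probabilities $1/(d_i+1)$ depend on the node degree, so a meeting at a corner node and a meeting at an interior node generically yield different distributions for the time to the next meeting; and for $N>2$ the positions of the robots \emph{not} involved in the current meeting also enter the excursion law and are not controlled by any symmetry of $\mathbf{P}$. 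The asserted invariance ``under the induced transition kernel on $\mathcal{D}$'' is precisely what would need to be proved, and it is false in general, so the identically-distributed half of Definition 7.1 in \cite{ross2014introduction} does not follow from your argument.

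For comparison, the paper does not attempt to verify identical distribution across $\mathcal{D}$ at all: after establishing positive recurrence it treats the infinitely many returns to a fixed configuration as regeneration times, invokes Proposition 67 of \cite{serfozo2009basics} to call $T^{a_i}(k)$ a regenerative process, and then asserts that every regenerative process is a renewal process. That route corresponds to the first half of your plan (the distinguished state $\hat{\imath}^{\star}$, whose successive return times are genuinely i.i.d.) and quietly drops the second half; strictly speaking the meeting-time process is a Markov-renewal (semi-Markov) counting process, and a fully rigorous statement would either redefine the renewal epochs as returns to a fixed configuration or weaken ``renewal process'' to a process with regenerative increments. So your instinct about where the difficulty lies is sharper than the paper's treatment of it, but the symmetry step you lean on to close the gap would fail.
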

\begin{proof}
Suppose that an initial time instant $k_0$, the locations of all $N$ robots on the spatial grid are represented by the node $\hat{\imath} \in \hat{\mathcal{V}}$. Consider another set of robot locations at time $k_0 + k$, where $k>0$, represented by the node $\hat{\jmath} \in \hat{\mathcal{V}}$. The transition of the robots from configuration $\hat{\imath}$ to configuration $\hat{\jmath}$ in $k$ time steps corresponds to a random walk of length $k$ on the composite Markov chain $\mathbf{\psi}_k$ from node $\hat{\imath}$ to node $\hat{\jmath}$. It also corresponds to a random walk by each robot $a_i$ on the spatial grid from node $\hat{\imath}(a_i)$ to node $\hat{\jmath}(a_i)$ in $k$ time steps. By construction, the graph $\mathcal{G}_s$ is strongly connected and each of its nodes has a self-edge.
Therefore, there exists a discrete time $n>0$ such that, for each robot $a_i \in \mathcal{N}$, there exists a random walk on the spatial grid from node $\hat{\imath}(a_i)$ to node $\hat{\jmath}(a_i)$ in $n$ time steps. Consequently, there always exists a random walk of length $n$ on the composite Markov chain $\mathbf{\psi}_k$ from node $\hat{\imath}$ to node $\hat{\jmath}$. Therefore, $\mathbf{\psi}_k$ is an irreducible Markov chain. All states of an irreducible Markov chain belong to a single communication class. In this case, all states are \textit{positive recurrent}. As a result, $\mathbf{\psi}_k$ is \textit{positive recurrent}. Thus, each state in $\mathbf{\psi}_k$ is visited infinitely often from all other states in $\mathbf{\psi}_k$. A state with this property is said to {\it regenerate} (or {\it renew}) infinitely often.
We can then conclude from Proposition 67 in \cite{serfozo2009basics}
that $T^{a_i}(k)$ is a regenerative process on $\psi_k$. Since every regenerative process is a renewal process,  $T^{a_i}(k)$ is a renewal process.
\end{proof}

\begin{algorithm}[h]
\SetAlgoLined
\DontPrintSemicolon
\SetNoFillComment
\textbf{Given:} $\mathcal{M}^{(a_i)}_{k}$, $\mathcal{M}^{(a_j)}_{k}$, $Y^{(a_i)}_k$,
$Y^{(a_j)}_k$\\
\For{$k \in1:t_{final}$}{
\For{$n_1 \in 1 : |\mathcal{N}|$}{
$l=1$; $n=1$;\\
\For{$n_2 \in n_1 + 1 : |\mathcal{N}|$}{
\If{$Y^{(n_1)}_k$ = $Y^{(n_2)}_k$}{
$R^{l}_{n}$=$\mathcal{M}^{(n_1)}_{k} \bigcup \mathcal{M}^{(n_2)}_{k}$;\\
$l=l+1$; $n=n+1$;\\
}
}
}
}
\caption{Renewal-reward computation for robots $(a_{i}, a_{j}) \in \mathcal{N}$}
\label{algo:RenewalReward}
\end{algorithm}

\begin{algorithm}[h]
\SetAlgoLined
\DontPrintSemicolon
\SetNoFillComment
\textbf{Given:} $X^{(a_i,a_j)}_{k-1}$,  $\hat{X}^{(a_i)}_k$, $\hat{X}^{(a_j)}_k$\\
\For{$l_1 \in 1 : size(|\hat{X}^{(a_i)}_k|,2)$}{
\For{$l_2 \in 1 : size(|\hat{X}^{(a_j)}_k|,2)$}{
$X_{temp}= \hat{X}^{(a_i)}_{k,l_1} \bigcup \hat{X}^{(a_j)}_{k,l_2}$; \\
\If{$X_{temp} \not \subset X^{(a_i,a_j)}_{k-1}$}{
$X^{(a_i,a_j)}_{k} = X^{(a_i,a_j)}_{k-1} \bigcup X_{temp}$;}
\Else{
$X^{(a_i,a_j)}_{k} = X^{(a_i,a_j)}_{k-1}$;}
}
$\mathcal{M}^{(a_i,a_j)}_k$ = $X^{(a_i,a_j)}_{k}$;
}
\caption{Exchange of set of estimated states between robots $(a_{i}, a_{j}) \in \mathcal{N}$}
\label{algo:UniqueSetTuplesConsensus}
\end{algorithm}

\section{Random Finite Sets Based Probability Hypothesis Density Filter}\label{sec:RFSPHD}
Let $M^{a_i}_{k} \leq n_m$ be the number of targets identified by quadrotor $a_i$ at time step $k$. Suppose that at time $k-1$, the target states are $ x_{k-1,1}^{a_i},~ x_{k-1,2}^{a_i},~ \ldots,~ x_{k-1,M^{a_i}_{k-1}}^{a_i} \in \mathcal{X}$, where $\mathcal{X}$ is the set of target states. At the next time step, some of these targets might disappear from the quadrotor's field of view (FoV), and new targets may appear. This results in $M^{a_i}_{k}$ new states $ x_{k,1}^{a_i}, ~x_{k,2}^{a_i}, ~ \ldots, ~x_{k,M^{a_i}_{k}}^{a_i}$. Note that the order in which the states are represented has no significance in the RFS multi-target tracking formulation. The quadrotor $a_i$ makes $N^{a_i}_{k}$ measurements $z_{k,1}^{a_i},~ \ldots,~ z_{k,N_{k}^{a_i}} \in \mathcal{Z}$ at time $k$, where $\mathcal{Z}$ is the set of measurements. The order in which the measurements are made is not significant. The states of the targets identified by quadrotor $a_i$ at time $k$ (i.e., the multi-target state) and the measurements obtained by the quadrotor at time $k$ can both be represented as finite sets: 
\begin{equation}
X_k^{a_i} = \{x_{k,1}^{a_i}, \ldots, x_{k,M^{a_i}_{k}}^{a_i} \}  \in \mathcal{F}(\mathcal{X}),
\label{eqn:RFStgtState}
\end{equation}
\begin{equation}
Z^{a_i}_{k} = \{ z_{k,1}^{a_i}, \ldots, z_{k,N_{k}^{a_i}} \} \in \mathcal{F}(\mathcal{Z}), 
\label{eqn:RFSMeasState}
\end{equation}
where $\mathcal{F}(\mathcal{X})$ is the multi-target state space and $\mathcal{F}(\mathcal{Z})$ is the measurement space. For a quadrotor $a_i$, given multi-target state $X_{k-1}^{a_i}$ at time $k-1$, each $x^{a_i}_{k-1} \in X_{k-1}^{a_i}$ either continues to exist (survives) at time $k$ with probability $p_{\mathbf{S},k}^{a_i}(x_{k-1}^{a_i})$ or disappears (dies) at time $k$ with probability $1 - p^{a_i}_{\mathbf{S},k}(x^{a_i}_{k-1})$. The conditional probability density at time $k$ of a transition from state $x^{a_i}_{k-1}$ to state $x^{a_i}_k$ is given by $f^{a_i}_{k|k-1}(\cdot|\cdot)$. 

We now define the RFS model for the time evolution of the multi-target state, which incorporates motion of the targets relative to the quadrotor, appearance (birth) of targets, and disappearance (death) of targets: 
\begin{equation}
X_k^{a_i} = \Bigg[ \bigcup_{\xi \in X_{k-1}^{a_i}} \mathbf{S}^{a_i}_{k|k-1}(\xi) \Bigg]  \bigcup \Bigg[ \bigcup_{\xi \in X_{k-1}^{a_i}} \mathbf{B}^{a_i}_{k|k-1}(\xi) \Bigg] \bigcup \mathbf{\Gamma}_{k}^{a_i}
\label{eqn:StateRFSComp}
\end{equation}

\begin{table}[h!]
    \centering
    \begin{tabular}{l l}
         $\mathbf{S}^{a_i}_{k|k-1}(\xi)$: & RFS of targets with previous state $\xi$ at time $k-1$ that survive at time $k$\\
         $\mathbf{B}^{a_i}_{k|k-1}(\xi)$: & RFS of targets spawned at time $k$ from targets with previous state $\xi$  at time $k-1$\\
         $\mathbf{\Gamma}^{a_i}_{k}$:& RFS of targets that are spontaneously born at time $k$
    \end{tabular}
    \label{tab:XComp}
\end{table}

At each time step, a quadrotor $a_i$ detects a target with state $x^{a_i}_k \in X^{a_i}_k$ with probability  $p_{\mathbf{D},k}^{a_i}(\cdot)$, or misses it with probability $1- p^{a_i}_{\mathbf{D},k}(\cdot)$. The conditional probability of obtaining a measurement $z^{a_i}_k \in Z^{a_i}_k$ from $x^{a_i}_k$ is characterized by the multi-target likelihood function, $g_{k}^{a_i}(\cdot|\cdot)$. 
We can now define the RFS model for the time evolution of the multi-target measurement, which incorporates  measurements of actual targets along with clutter:
\begin{equation}
    Z_k^{a_i} = \mathbf{K}_{k}^{a_i} \bigcup \Bigg[  \bigcup_{x \in X_k^{a_i}} \mathbf{\Theta}^{a_i}_{k}(x) \Bigg]
    \label{eqn:MeasRFSComp}
\end{equation}
\begin{table}[h!]
    \centering
    \begin{tabular}{l l}
         $\mathbf{K}^{a_i}_{k}$: & RFS of measurements arising from clutter at time $k$ \\
         $\mathbf{\Theta}^{a_i}_{k}(x)$:& RFS of measurements of the multi-target state $X_k^{a_i}$ at time $k$ 
   \end{tabular}
    \label{tab:ZComp}
\end{table}
 
The multi-target Bayes filter propagates the multi-target posterior density $p_k^{a_i}(\cdot~|~Z^{a_i}_{1:k})$ in time via recursion as: 
\begin{equation}
    p_{k|k-1}^{a_i}(X^{a_i}_k | Z_{1:k-1}^{a_i}) = \int_{X^{a_i} \in \mathcal{F}(\mathcal{X})} f^{a_i}_{k|k-1}(X_k^{a_i} |X^{a_i}) p_{k-1}^{a_i}(X^{a_i}|Z_{1:k-1}^{a_i}) \mu_s(d X^{a_i})
    \label{eqn:MultiTargetPredicted}
\end{equation}
\begin{equation}
    p_k^{a_i}(X^{a_i}_k |Z^{a_i}_{1:k}) = \frac{g_k^{a_i}(Z^{a_i}_{k} | X^{a_i}_{k}) p_{k|k-1}^{a_i}(X^{a_i}_k | Z_{1:k-1}^{a_i})} {\int_{X^{a_i} \in \mathcal{F}(\mathcal{X})}g_k(Z^{a_i}_{k} | X^{a_i}) p_{k|k-1}^{a_i}(X^{a_i} | Z_{1:k-1}^{a_i}) \mu_s(dX^{a_i})}
    \label{eqn:MulitTargetPosterior}
\end{equation}
where $\mu_S$ is a suitable reference measure on $\mathcal{F}(\mathcal{X})$ of target states $X^{a_i} \in \mathcal{{F}(\mathcal{X})}$, $g_k^{a_i}(\cdot|\cdot)$ represents the multi-target likelihood function, and $f_{k|k-1}^{a_i}(\cdot|\cdot)$ represents the multi-target transition density. For further details, see \cite{clark2006gm, vo2006gaussian}.

We will approximate the integrals above using the framework of the probability hypothesis density (PHD) filter, with the assumptions that: (1) each target evolves and generates observations independently of the others; (2) clutter is Poisson distributed and independent of target-originated measurements; (3) the multi-target RFS is Poisson distributed. For a RFS $X^{a_i} \in \mathcal{X}$ with probability distribution $p^{a_i}(\cdot)$, there is a non-negative function $v$ on $\mathcal{X}$, defined as the {\it intensity function}, such that for each region $\mathcal{S} \subset \mathcal{X}$,
\begin{equation}
    \int |X^{a_i} \cap \mathcal{S} |p^{a_i} (dX) = \int_{\mathcal{S}} v(x) dx. 
    \label{eqn:RFSIntensityFunction}
\end{equation} 
Then we can model the posterior intensity and its recursion as follows: 
\begin{equation}
    v^{a_i}_{k|k-1}(x) = \int p_{\mathbf{S},k}^{a_i}(\xi) f^{a_i}_{k|k-1}(x | \xi) v^{a_i}_{k-1}(\xi)d\xi + \int \beta^{a_i}_{k|k-1}(x|\xi) v^{a_i}_{k-1}(\xi) d\xi + \gamma^{a_i}_{k}(x),
    \label{eqn:Predicted_Intensity}
\end{equation}
\begin{equation}
    v_k^{a_i}(x) = [1-p^{a_i}_{\mathbf{D},k}(x)] v^{a_i}_{k|k-1}(x) + \sum_{z \in Z^{a_i}_k} \frac{p^{a_i}_{\mathbf{D},k}(x)g^{a_i}_k(z|x)v^{a_i}_{k|k-1}(x)}
    {\kappa^{a_i}_{k}(z) + \int p^{a_i}_{\mathbf{D},k}(\xi) g^{a_i}_{k}(z|\xi)v^{a_i}_{k|k-1}(\xi)}.
    \label{eqn:Updated_Intensity}
\end{equation}
In these equations, $v_k^{a_i}$ and $v_{k|k-1}^{a_i}$ denote the  intensities associated with, respectively, the multi-target posterior density $p_k^{a_i}(\cdot|\cdot)$ and the multi-target predicted density $p^{a_i}_{k|k-1}(\cdot|\cdot)$ that are defined by the recursion in \autoref{eqn:MultiTargetPredicted} and \autoref{eqn:MulitTargetPosterior}. The function $\gamma_{k}^{a_i}(\cdot)$ is the intensity of the RFS $\mathbf{\Gamma}_k^{a_i}$,  $\beta^{a_i}_{k|k-1}(\cdot|\xi)$ is the intensity of the RFS $\mathbf{B}_{k|k-1}(\xi)$,  and $\kappa_{k}^{a_i}(\cdot)$ is the intensity of the RFS $\mathbf{K}^{a_i}_k$. The quadrotor $a_i$ can estimate the number of targets as  
\begin{equation}
    \hat{N} = \int v(x) dx.
    \label{eqn:PHDTargetEstimate}
\end{equation}
The estimate $\hat{N}$ is used to update the number of elements of $\mathcal{M}^{a_i}_k$, and the intensity $v^{a_i}_{k}(x)$ computed from \autoref{eqn:Updated_Intensity} is used to update the states of the $\hat{N}$ targets. Then each element of $\mathcal{M}^{a_i}_k$ is represented as the following tuple:
\begin{equation}
    m^{a_i}_{k,l} = \langle ~l,~ v^{a_i}_k(x) ~\rangle,
    \label{eqn:Target_Set}
\end{equation}
where $l$ is a label for the tracked target, such as one of its properties, e.g. its color, shape, size or its position in the environment.

\subsection{Gaussian Mixture PHD Filter}\label{subsec:GMPHD Filter}
The PHD filter as  described in \eqref{eqn:Predicted_Intensity} and  \eqref{eqn:Updated_Intensity} does not admit a closed-form solution in general, and the numerical integration suffers from the curse of dimensionality. Thus, for implementation purposes, we consider a sub-optimal solution of the PHD filter that models approximates it as a mixture of Gaussians, as described 
in \cite{vo2006gaussian}. The Gaussian Mixture PHD (GM-PHD) filter provides a closed-form solution to the PHD filter under the following assumptions:
\begin{enumerate}
    \item[$\textbf{A.1}$] Each target generates observations independently of the others.
    \item[$\textbf{A.2}$] The clutter process is Poisson distributed and is independent of target-generated measurements.
   \item[$\textbf{A.3}$] Each target's state evolves according to a linear model with Gaussian process noise, and each quadrotor's sensor has a linear measurement model with Gaussian sensor noise, i.e. 
    \begin{equation}
        f^{a_i}_{k|k-1}(x | \xi) = \mathbb{N}(x; F_{k-1}, Q_{k-1}),
        \label{eqn:Target_Dynamics}
    \end{equation}
    \begin{equation}
        g^{a_i}_{k}(z|\xi) = \mathbb{N}(z; H_{k} x, R_k),
        \label{eqn:Measurement_Likelihood}
    \end{equation}
    where the notation $\mathbb{N}(\cdot~; \mu, \sigma)$ denotes a Gaussian density with mean $\mu$ and covariance $\sigma$, $F_{k-1}$ is the state transition matrix, $Q_{k-1}$ is the process noise covariance, $H_k$ is the observation or measurement matrix, and $R_k$ is the sensor noise covariance.
    \item[$\textbf{A.4}$] The detection probability is state-dependent and is modeled as
\begin{equation}
    p^{a_i}_{\mathbf{D},k}(x) = 
    \begin{cases}
    p_{\mathbf{D}} & || q^{a_i}_k - x || \in \mathcal{B}_{r}(q^{a_i}_k), \\
    0 & \text{otherwise},
    \end{cases}
    \label{eqn:Detection_Probability}
\end{equation}
 where $q^{a_i}_k$ denotes the grid coordinates of robot $a_i$ at time $k$ and $\mathcal{B}_{r}(q^{a_i}_k)$ represents the FoV of the sensor on robot $a_i$, which we model as a disc of radius $r$ centered at the robot location $q^{a_i}_k$. The survival probability is assumed to be constant:
\begin{equation}
    p^{a_i}_{\mathbf{S},k}(x) = p_{\mathbf{S},k}.
    \label{eqn:Survival_Probability}
\end{equation} 
\item[$\textbf{A.5}$] The birth and spawning intensities are modeled as Gaussian mixtures of the form
\begin{equation}
    \gamma^{a_i}_{k}(x) = \sum_{i=1}^{J_{\gamma,k}} w^{(i)}_{\gamma,k} \mathbb{N}(x;\mu_{\gamma,k}^{(i)}, P_{\gamma,k}^{(i)})
    \label{eqn:Birth_Intensity_GMM},
\end{equation}
\begin{equation}
    \beta^{a_i}_{k|k-1}(x|\xi) =\sum_{j=1}^{J_{\beta,k}} w^{(j)}_{\beta,k} \mathbb{N}(x;F_{\beta,k-1}^{(j)}\xi+d_{\beta,k-1}^{(j)}, Q_{\beta,k-1}^{(j)}), 
    \label{eqn:Spwanning_Intensity_GMM}
\end{equation}
where $J_{\gamma,k}$, $w^{(i)}_{\gamma,k}$, $\mu_{\gamma,k}^{(i)}$, and $P_{\gamma,k}^{(i)}$ are known parameters of the birth intensity, and 
$J_{\beta,k}$, $w^{(i)}_{\beta,k}$, $F_{\beta,k-1}^{(j)}$, $d_{\beta,k-1}^{(j)}$,  $Q_{\beta,k-1}^{(j)}$, and $P_{\beta,k-1}^{(j)}$ are known parameters of the spawn intensity of a target with state $\xi$ at time $k-1$. For more details on the parameters, please refer to \cite{vo2006gaussian}.
\end{enumerate}

\vspace{3mm}

Using the above assumptions, we can rewrite \autoref{eqn:Predicted_Intensity} and \autoref{eqn:Updated_Intensity} as follows. The intensity associated with the multi-target predicted density can be approximated as a Gaussian mixture:
\begin{equation}
    v^{a_i}_{k|k-1}(x) = v^{a_i}_{\mathbf{S},k|k-1}(x) + v^{a_i}_{\beta, k|k-1}(x) + \gamma^{a_i}_{k}(x),
    \label{eqn:Predicted_Intensity_GMM}
\end{equation}
where 
\begin{equation}
    v^{a_i}_{\mathbf{S},k|k-1}(x) = p_{\mathbf{S},k} \sum_{i=1}^{J_{k-1}} w^{(i)}_{k-1} \mathbb{N}(x;\mu_{\mathbf{S},k|k-1}^{(i)}, P_{\mathbf{S},k|k-1}^{(i)}),
    \label{eqn:survival_GM_PHD}
\end{equation}
\begin{equation}
    \mu_{\mathbf{S},k|k-1}^{(i)} = F_{k-1}\mu_{k-1}^{(i)},
    \label{eqn:mu_survival_GM_PHD}
\end{equation}
\begin{equation}
    P_{\mathbf{S},k|k-1}^{(i)} = Q_{k-1} + F_{k-1} P_{k-1}^{(i)} F_{k-1}^{T},
    \label{eqn:sigma_survival_GM_PHD}
\end{equation}
\begin{equation}
    v^{a_i}_{\beta, k|k-1}(x) = \sum_{i=1}^{J_{k-1}} \sum_{l=1}^{J_{\beta,k}} w^{(i)}_{k-1} w^{(l)}_{\beta,k} \mathbb{N}(x;\mu_{\beta,k|k-1}^{(i,l)},P_{\beta,k|k-1}^{(i,l)}),
    \label{eqn:birth_GM_PHD}
\end{equation}
\begin{equation}
    \mu_{\beta,k|k-1}^{(i,l)} = F_{\beta,k-1}^{(l)} \mu_{k-1}^{(i)} + d_{\beta,k-1}^{(l)},
    \label{eqn:mu_birth_GM_PHD}
\end{equation}
\begin{equation}
    P_{\beta,k|k-1}^{(i,l)} = Q_{\beta,k-1}^{(l)} + F_{\beta,k-1}^{(l)} P_{\beta,k-1}^{(i)} (F_{\beta,k-1}^{(l)})^{T}, 
    \label{eqn:sig_birth_GM_PHD}
\end{equation}
in which $J_{k-1}$, $w^{(i)}_{k-1}$, $\mu_{k-1}^{(i)}$, and $P_{k-1}^{(i)}$ are known parameters of the intensity function at time $k-1$ \cite{vo2006gaussian}.

Then the intensity associated with the multi-target posterior density can be approximated as a Gaussian mixture:
\begin{equation}
    v_k^{a_i}(x) = [1-p^{a_i}_{\mathbf{D},k}(x)] v^{a_i}_{k|k-1}(x) + \sum_{z \in Z^{a_i}_k}  v^{a_i}_{\mathbf{D},k}(x;z),
    \label{eqn:update_intensity_GM_PHD}
\end{equation}
where 
\begin{equation}
    v^{a_i}_{\mathbf{D},k}(x;z) = \sum_{j=1}^{J_{k|k-1}} w^{(j)}_k(z) \mathbb{N}(x;\mu^{(j)}_{k|k}(z),P_{k|k}^{(j)}),
    \label{eqn:detection_update_GM_PHD}
\end{equation}
\begin{equation}
    w^{(j)}_{k}(z) = \frac{p^{a_i}_{\mathbf{D},k}(x) w^{(j)}_{k|k-1}\mathbb{N}(z;H_{k}\mu_{k|k-1},R_{k} + H_{k} P_{k} H_{k}^{T})}{\kappa_{k}^{a_i}(z) + p^{a_i}_{\mathbf{D},k}(x) \sum_{l=1}^{J_{k|k-1}}w^{(l)}_{k|k-1} \mathbb{N}(z;H_{k}\mu_{k|k-1},R_{k} + H_{k} P_{k} H_{k}^{T})},
    \label{eqn:update_weight_GM_PHD}
\end{equation}
\begin{equation}
    \mu^{(j)}_{k|k}(z) = \mu^{(j)}_{k|k-1}(z) + K_{k}^{(j)}(z-H_{k}\mu^{(j)}_{k|k-1}(z)),
    \label{eqn:mu_update_GM_PHD}
\end{equation} 
\begin{equation}
    P_{k|k}^{(j)} = [I-K_{k}^{(j)}H_{k}]P^{(j)}_{k|k-1},
    \label{eqn:sig_update_GM_PHD}
\end{equation}
\begin{equation}
    K_{k}^{(j)} = P^{(j)}_{k|k-1}H_{k}^{T} [H_{k} P^{(j)}_{k|k-1}H_{k}^{T} + R_{k}]^{-1}.
    \label{eqn:Kalman_gain}
\end{equation}

\section{Simulation Results}\label{sec:SimResults}
In this section, we validate our approach with simulations in MATLAB. First, we model a scenario with a bounded environment with dimensions 5m $\times$ 5m that contains 3 stationary targets which must be located by 3 robots, as shown in \autoref{fig:Environment_with_Targets_4}. The state of each target is defined as its $x-y$ position coordinates, $x = [p_x, p_y]^{T}$. 
A robot's sensor measurement of a target's state is modeled according to Equation \eqref{eqn:Measurement_Likelihood}.
Each robot has a circular FoV of radius $r_{FOV} = 0.6$m, centered at the robot's position on the spatial grid. We assume that each robot is able to accurately localize itself on the grid, and that there are no obstacles present in the environment.

Since each agent has a limited FoV, We assume that the targets that are detected at time step $k$ survive in the next time step with probability $p_{\mathbf{S},k} = 0.1$ for all robots. Since the targets are stationary, $F_{k-1} = \mathbf{I}_{2}$, the $2 \times 2$ identity matrix. We also set $Q_{k-1} = 0.2\mathbf{I}_{2}$. As the robots explore the environment, new targets might appear in their FoV. We account for this by allowing 4 new targets 
to be birthed at each time step, depending upon the robot's position on the grid, with weights $w_{\gamma,k} =(w_{\gamma,k}^{(i)})_{i=1}^{4} = [0.1, 0.1, 0.1, 0.1]^{T}$. Thus, the birth intensity at each time step from Equation \eqref{eqn:Birth_Intensity_GMM} is modeled as
\begin{equation}
    \gamma^{a_i}_{k}(x) = 0.1 \mathbb{N}(x;\mu_{\gamma,k}^{(1)}, P_{\gamma,k}^{(1)}) +
    0.1 \mathbb{N}(x;\mu_{\gamma,k}^{(2)}, P_{\gamma,k}^{(2)}) +
    0.1 \mathbb{N}(x;\mu_{\gamma,k}^{(3)}, P_{\gamma,k}^{(3)}) + 
    0.1 \mathbb{N}(x;\mu_{\gamma,k}^{(4)}, P_{\gamma,k}^{(4)}),
\end{equation}
where $P_{\gamma}^{(l)} = 0.5\mathbf{I}_{2}$ and
\begin{equation}
    \mu_{\gamma}^{(l)} = 
    \begin{bmatrix}
    p_{x,k}^{a_i} +r_{birth}\cos(\theta_{l}) \\
    p_{y,k}^{a_i} +r_{birth}\sin(\theta_{l})
    \end{bmatrix},
\end{equation} 
in which $q^{a_i}_{k} =[p_{x,k}^{a_i} p_{y,k}^{a_i}]$ denotes the $x-y$ coordinates of robot $a_i$ at time step $k$, corresponding to its current node $Y^{a_i}_k$;  $r_{birth}=0.8 r_{FOV}$, so that the targets are birthed only near the boundary of FOV; and
$\theta_{l} = [\pi/4,~  3\pi/4, ~5\pi/4, ~7\pi/4] ^{T}$ rad, the angles at which targets are likely to appear. We assume that there are no spawned targets.  Each target is detected with a probability of $p_{\mathbf{D}} = 0.8$, and a quadrotor's observation of a target follows the measurement model \eqref{eqn:Measurement_Likelihood} with $H_k = \mathbf{I}_2$ and $R_k = 0.25\mathbf{I}_2$. The observations are immersed in clutter that can be modeled as a Poisson RFS $\mathbf{K}_k^{(\cdot)}$ with intensity  $\kappa_{k}^{(\cdot)}(z) = \lambda_C A_{s} \mathbb{U}(z)$, where $\lambda_C = 3.98 \times 10^{-3}$ is the clutter intensity; $A_{s}$ is the area of the sensor's circular FoV, which is approximately 1m$^2$; and $\mathbb{U}(z)$ is the uniform density over $A_s$.

We assume that all robots start at random positions on the grid and have no knowledge of the number of targets or their states (positions). The robots explore the environment according to the random walk model  \eqref{eqn:DTDSMarkovChain}. As the robots detect the targets, they recursively update their estimates of the number of targets and their positions using the GM-PHD framework described in Section \ref{subsec:GMPHD Filter}. We set $T=1\times 10^{-3}$ as the pruning threshold and $U=4$ as the merging distance threshold (see Table II in \cite{vo2006gaussian} for details on these parameters). 
\begin{figure}[t]
    \centering
    \includegraphics[height=0.4\textwidth]{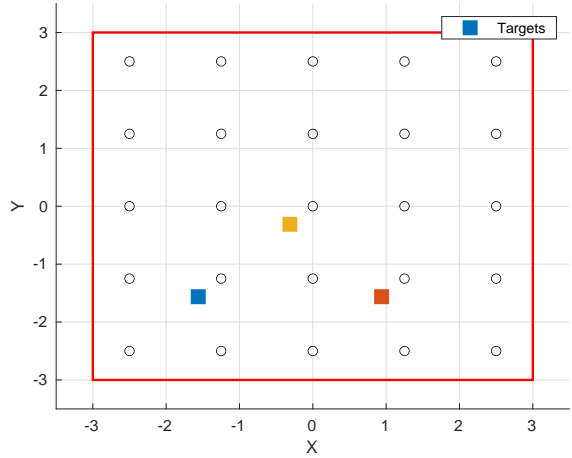}
    \caption{A 5m $\times$ 5m square environment, with hollow circles denoting the grid nodes and squares denoting the targets. The red border denotes the boundary of the area that is explored by 3 robots.}
    \label{fig:Environment_with_Targets_4}
\end{figure}
\begin{figure}[h]
\begin{subfigure}{0.48\textwidth}
    \centering
    \includegraphics[height=0.7\textwidth,width=\textwidth]{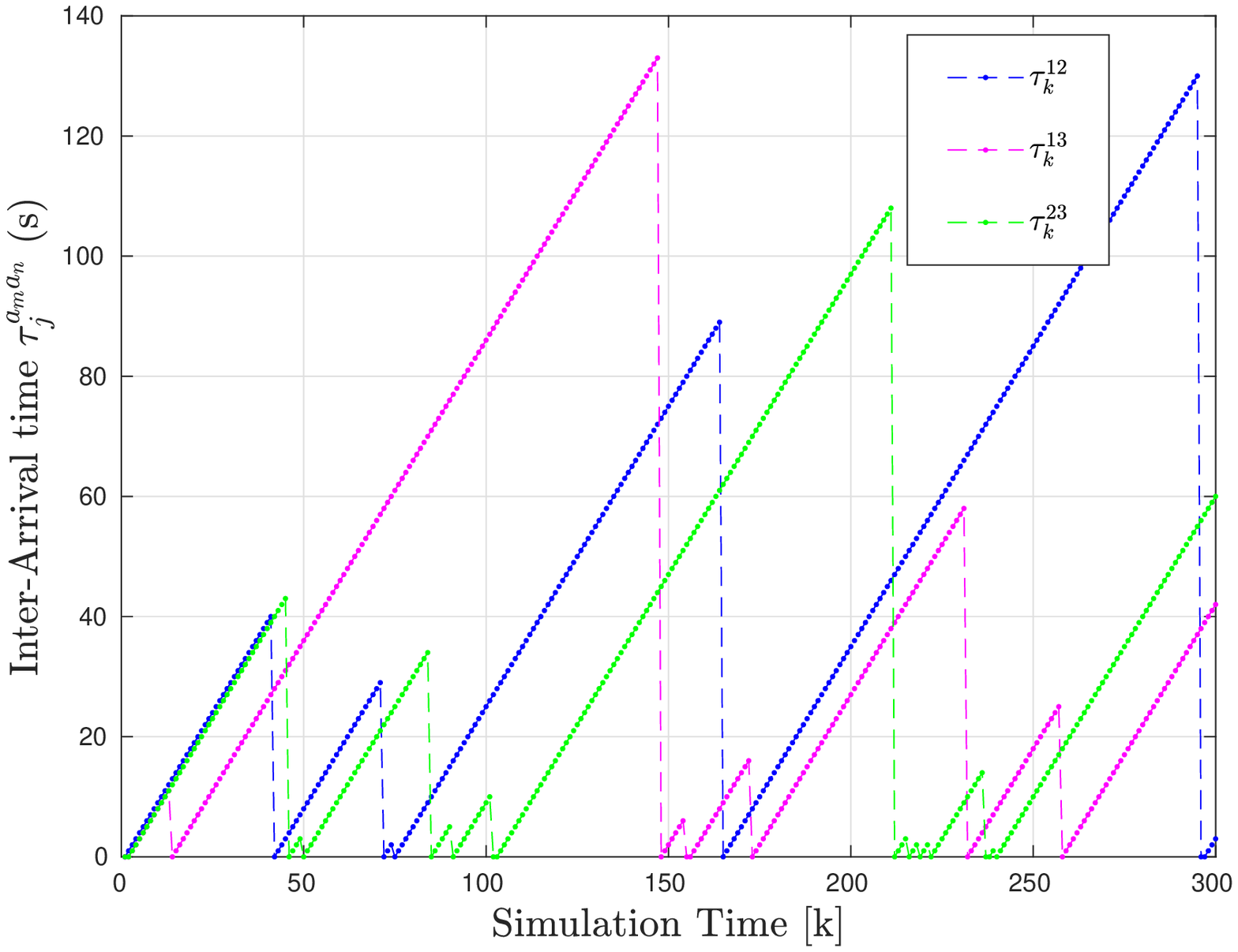}
    \caption{}
    \label{fig:Renewal_process_3_agents_5mx5m_grid}
    \end{subfigure}
    \begin{subfigure}{0.48\textwidth}
    \centering
    \includegraphics[height=0.7\textwidth,width=\textwidth]{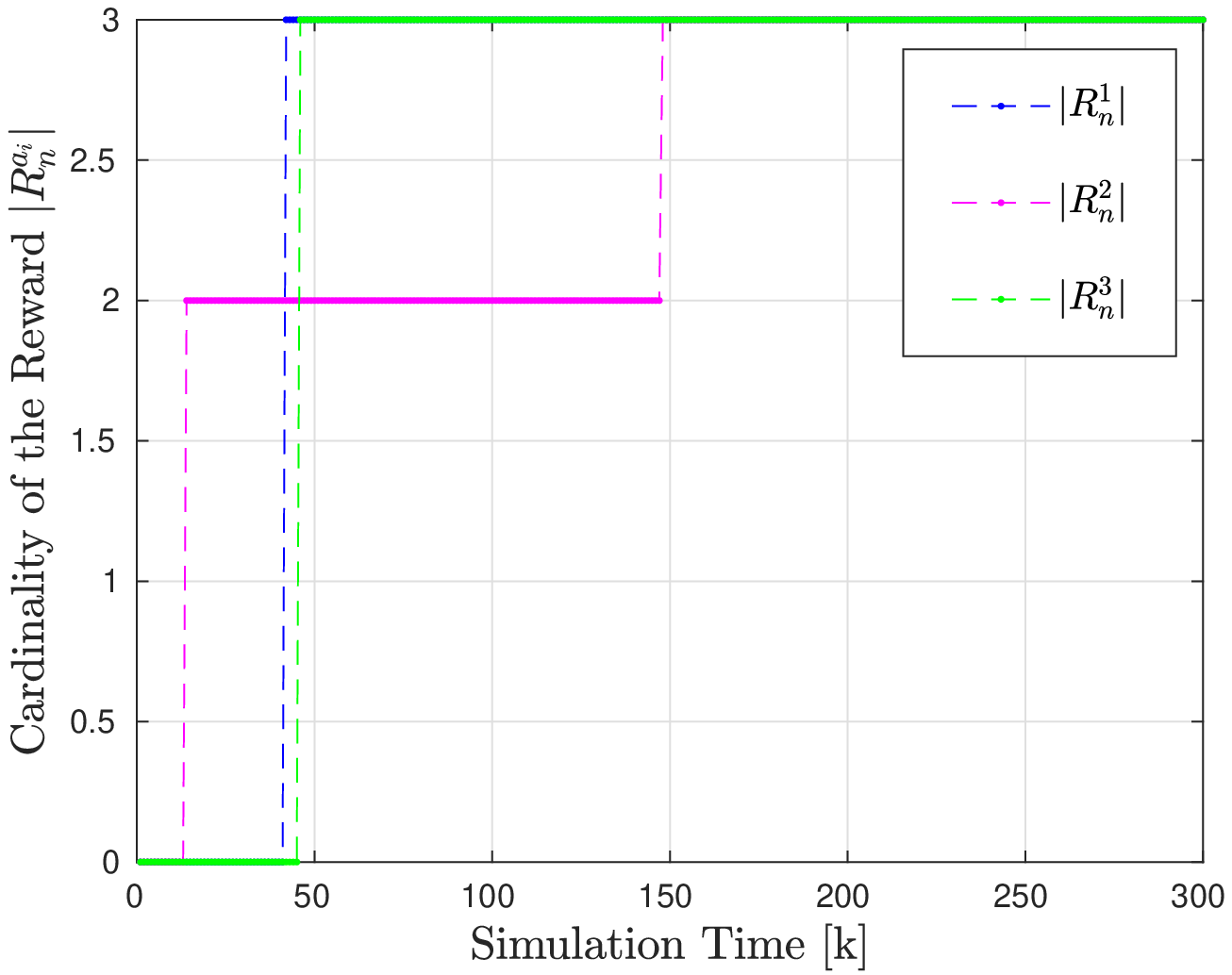}
    \caption{} 
    \label{fig:Reward_process_3_agents_5mx5m_grid}
    \end{subfigure}
    \caption{(a) Inter-arrival times during a simulation of 3 robots exploring the environment in \autoref{fig:Environment_with_Targets_4}. Renewal epochs, i.e. times when two robots meet at a node, are the times at the peaks of the graphs. Each renewal epoch marks the initialization of a new inter-arrival time.
    (b) Cardinality of the reward accumulated by each of the  robots at each time step.}
\end{figure}
\autoref{fig:Renewal_process_3_agents_5mx5m_grid} plots the inter-arrival times over time during 300 s of the simulation. Each inter-arrival time $\tau_j^{a_m a_n}$ ends at a renewal epoch, i.e. a time when any two robots $a_m$ and $a_n$ meet at a node, which can be identified in the figure as the time at the corresponding peak of the graph. At this time, the next inter-arrival time $\tau_{j+1}^{a_m a_n}$ is initialized to zero. 
\autoref{fig:Reward_process_3_agents_5mx5m_grid} plots the time evolution of the cardinality of the reward \eqref{eqn:MeetingReward} earned by each robot, which is the estimated number of targets. The average inter-arrival time over this simulation run was calculated to be $\mathbb{E}[\tau^{(\cdot)}_k] \sim 68$ s, and the time required for the cardinality of all robots' rewards to equal the actual number of targets, $n=3$, was $t_{reward} \sim 150$ s. 
Thus, for a scenario with both a robot density (number of robots per m$^2$) and a target density (number of targets per m$^2$) of $3/25 = 0.12$ m$^{-2}$, there must be about $\frac{t_{reward}}{\mathbb{E}[\tau^{(\cdot)}_j]} \approx 2.2$ renewals, i.e. at least 2 renewals, for all robots to achieve the same reward cardinality (estimated number of targets).
\autoref{fig:Target_Posn_Detected_Agent1}, \autoref{fig:Target_Posn_Detected_Agent2}, and \autoref{fig:Target_Posn_Detected_Agent3} show the true positions of the targets and their estimated positions by each robot at the end of the simulation time.  \autoref{fig:GM_PHD_Inetnsity_agent_1}, \autoref{fig:GM_PHD_Inetnsity_agent_2}, and \autoref{fig:GM_PHD_Inetnsity_agent_3} show the corresponding PHD intensity for each robot as a Gaussian mixture model with $n=3$ components (the number of targets), computed from  Equation  \eqref{eqn:update_intensity_GM_PHD}.
We obtain the number of targets estimated by each robot $a_i$ as 
\begin{equation}
    \hat{N}^{a_i} = \ceil[\bigg]{\sum_{l=1}^{n}w_k^{(l),a_{i}}},
\end{equation} 
where the weights $w^{(\cdot), a_{i}}_{k}$ for robots $a_i = 1,2,3$ are given by the peak intensities in \autoref{fig:GM_PHD_Inetnsity_agent_1},  \autoref{fig:GM_PHD_Inetnsity_agent_2}, and \autoref{fig:GM_PHD_Inetnsity_agent_3}, respectively. The estimated positions of the targets are obtained from positions of these peak intensities.

We also evaluated our approach through Monte Carlo simulations of three scenarios, with 100 simulation runs for each scenario. In all scenarios, 20 robots explored a grid according to the random walk model \eqref{eqn:DTDSMarkovChain} in order to track a set of stationary targets. The robots were initialized at random positions on the grid, and the positions of the targets were kept the same over all 100 runs for each scenario. In Scenario 1, simulated for 1000 s, the grid has dimensions 15m $\times$ 15m and there are 10 targets; in Scenario 2, simulated for 2000 s, the grid has dimensions 20m $\times$ 20m and there are 15 targets; and in Scenario 3, simulated for 3000 s, the grid has dimensions 30m $\times$ 30m and there are 20 targets. The mean inter-arrival time and mean reward percentage  for each scenario, averaged over all 100 runs, are given in \autoref{tab:MonteCarloSims}. The  mean reward percentage is computed from the ratio of the mean number of targets detected by the robots until the mean inter-arrival time to the actual number of targets in the scenario. 
\autoref{tab:MonteCarloSims} shows that the mean inter-arrival time increases as the density of robots in the environment decreases, which is due to the lower rate of robots encounters with one another in larger environments, on average. The table also shows that as the density of targets in the environment decreases, the percentage of targets identified before the mean inter-arrival time increases, on average. This indicates that in the scenarios simulated, the longer inter-arrival times for larger environments tend to enable identification of a higher number of targets, despite the lower target density.
\begin{figure}[H]
\begin{subfigure}{0.45\textwidth}
\centering
    \includegraphics[height=0.6\textwidth,width=\textwidth]{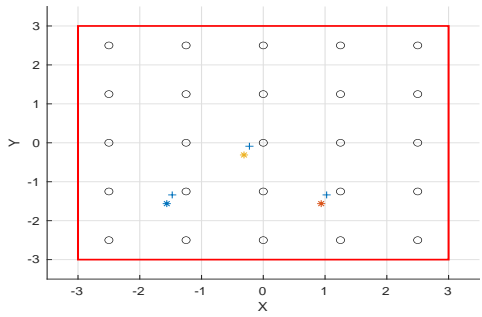}
    \caption{}
    \label{fig:Target_Posn_Detected_Agent1}
\end{subfigure}
\begin{subfigure}{0.45\textwidth}
    \centering
    \includegraphics[height=0.6\textwidth,width=\textwidth]{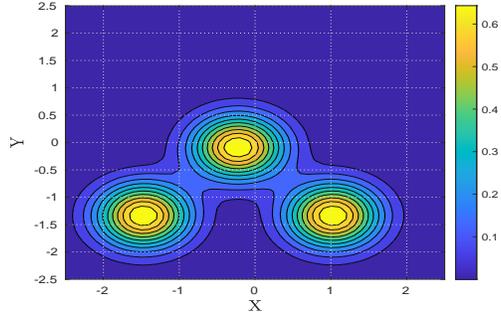}
    \caption{}
    \label{fig:GM_PHD_Inetnsity_agent_1}
    \end{subfigure}
    \caption{Multi-target tracking by robot 1. (a) Estimated ($*$) and true ($+$) target positions. (b) GM-PHD intensities computed from Equation \eqref{eqn:update_intensity_GM_PHD}.}
\end{figure}
\begin{figure}[H]
\begin{subfigure}{0.45\textwidth}
\centering
    \includegraphics[height=0.6\textwidth,width=\textwidth]{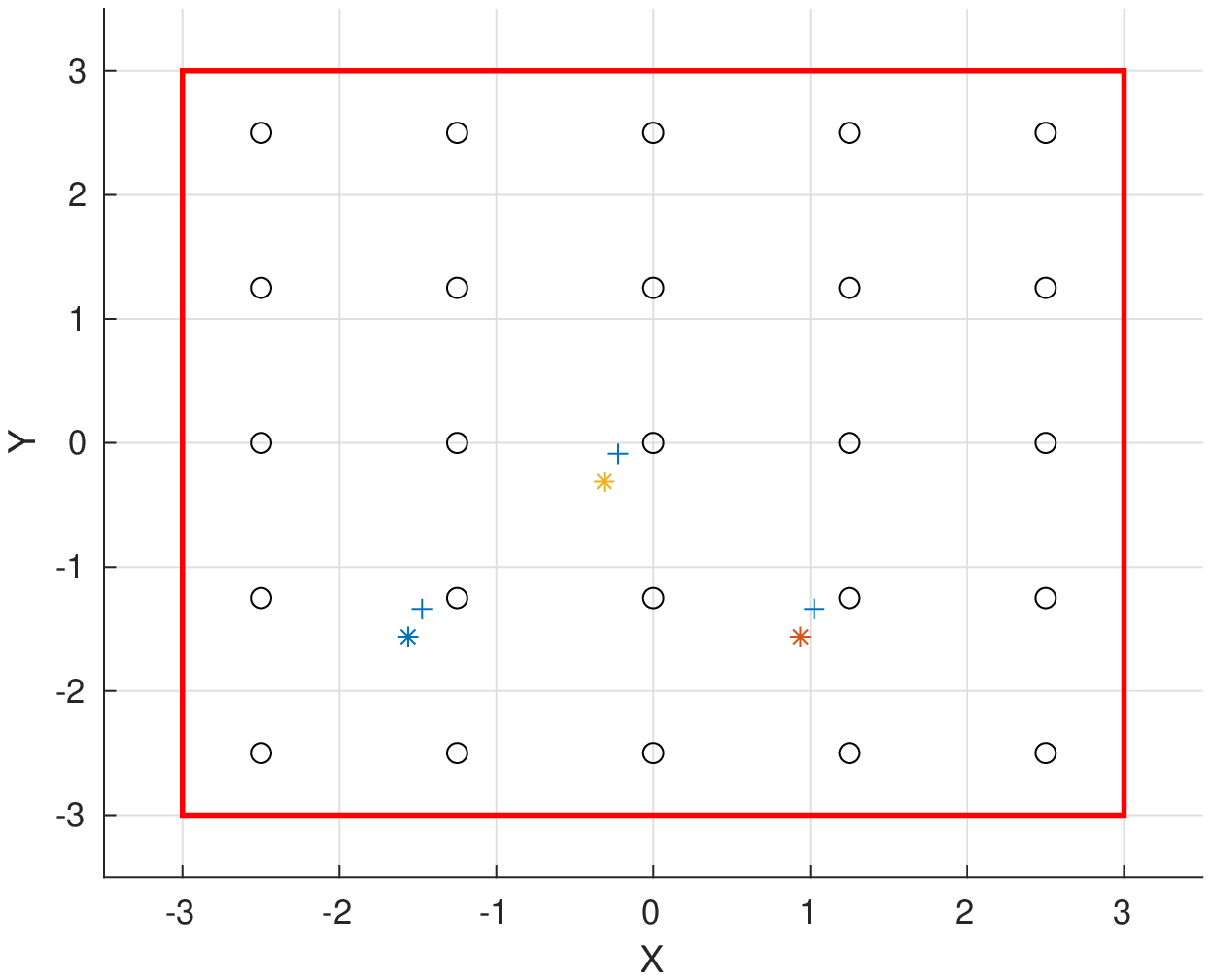}
    \caption{}
    \label{fig:Target_Posn_Detected_Agent2}
\end{subfigure}
\begin{subfigure}{0.45\textwidth}
    \centering
    \includegraphics[height=0.6\textwidth,width=\textwidth]{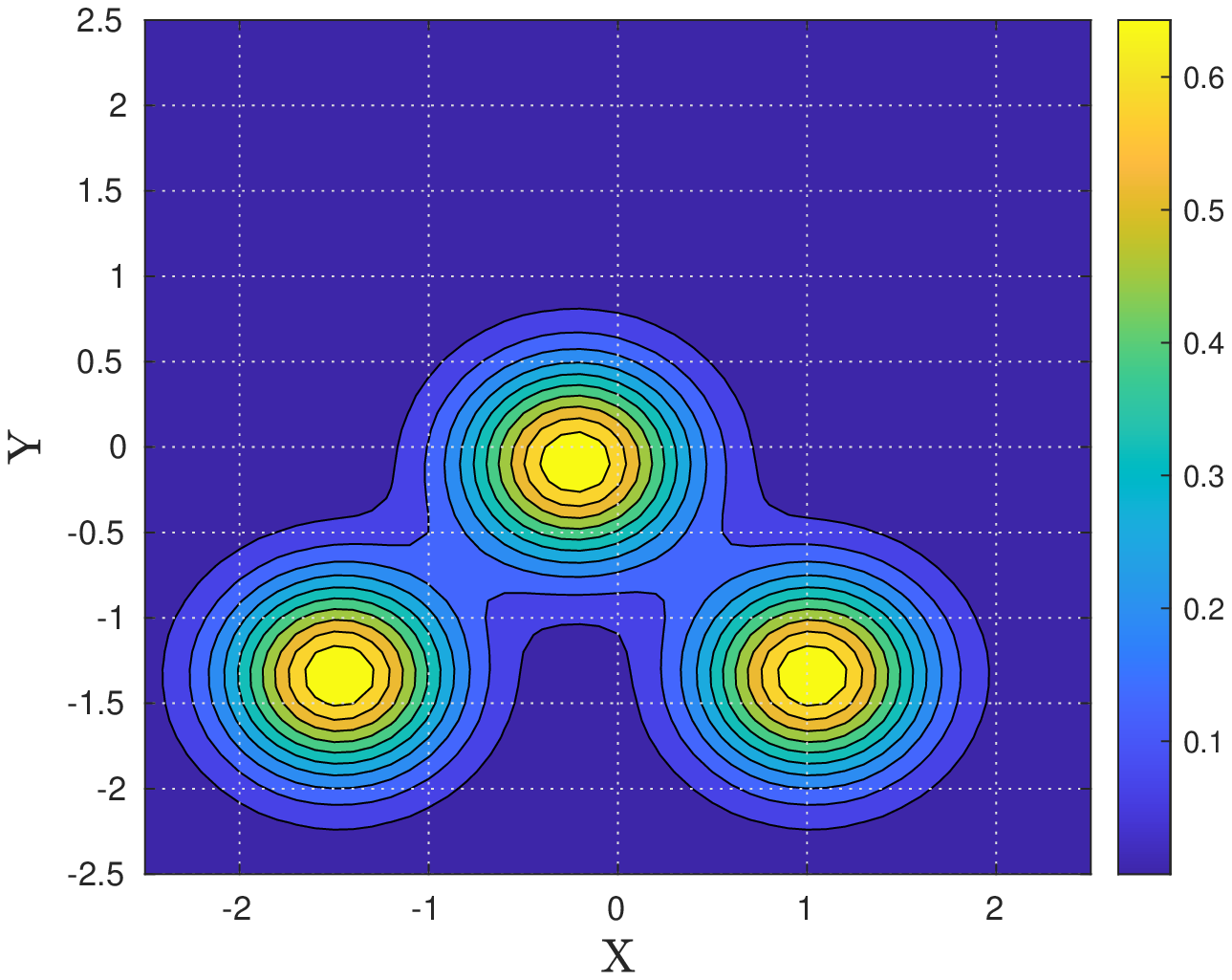}
    \caption{}
    \label{fig:GM_PHD_Inetnsity_agent_2}
\end{subfigure}
\caption{Multi-target tracking by robot 2. (a) Estimated ($*$) and true ($+$) target positions. (b) GM-PHD intensities computed from Equation \eqref{eqn:update_intensity_GM_PHD}.}
\end{figure}
\begin{figure}[H]
\begin{subfigure}{0.45\textwidth}
\centering
    \includegraphics[height=0.6\textwidth,width=\textwidth]{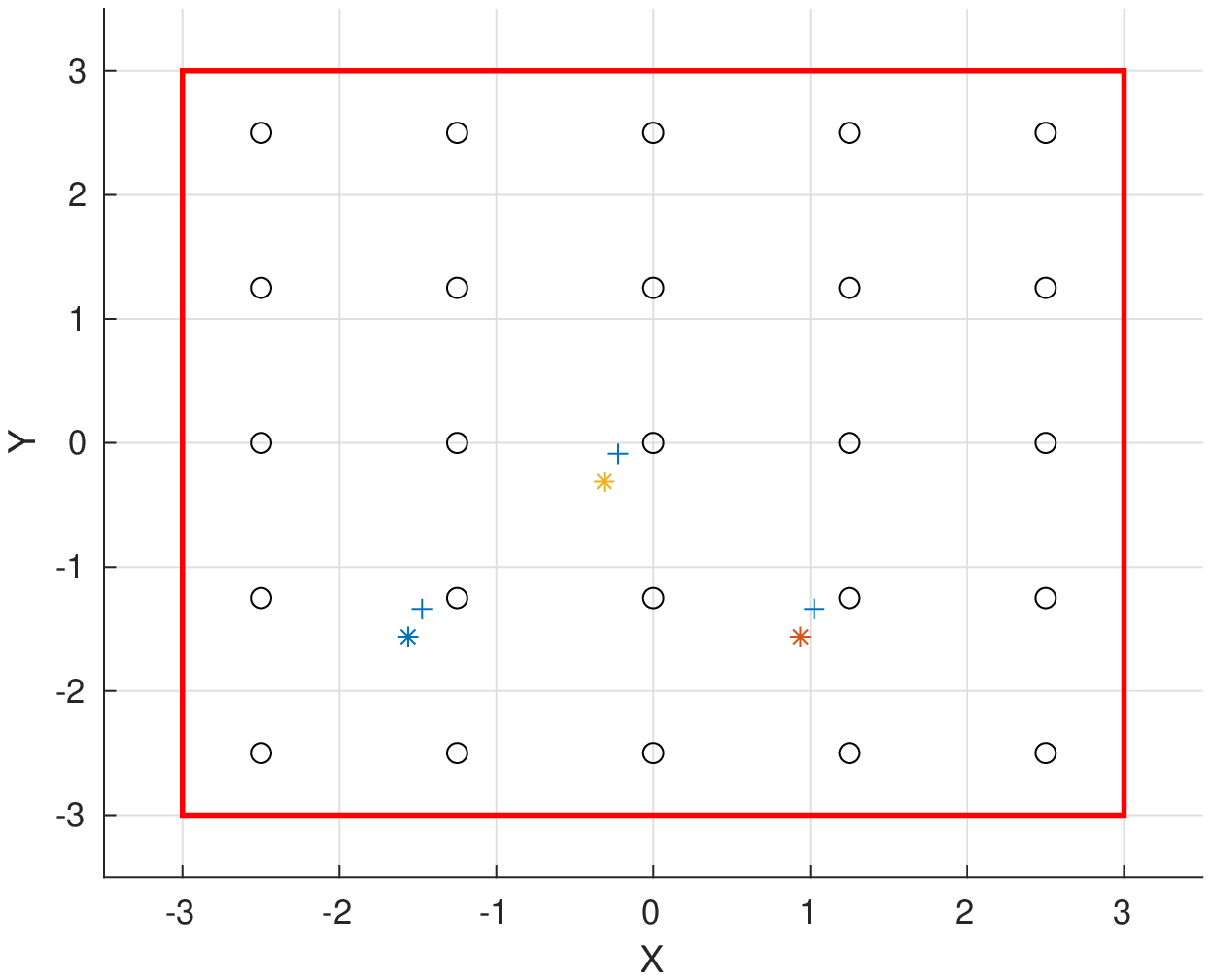}
    \caption{}
    \label{fig:Target_Posn_Detected_Agent3}
\end{subfigure}
\begin{subfigure}{0.45\textwidth}
    \centering
    \includegraphics[height=0.6\textwidth,width=\textwidth]{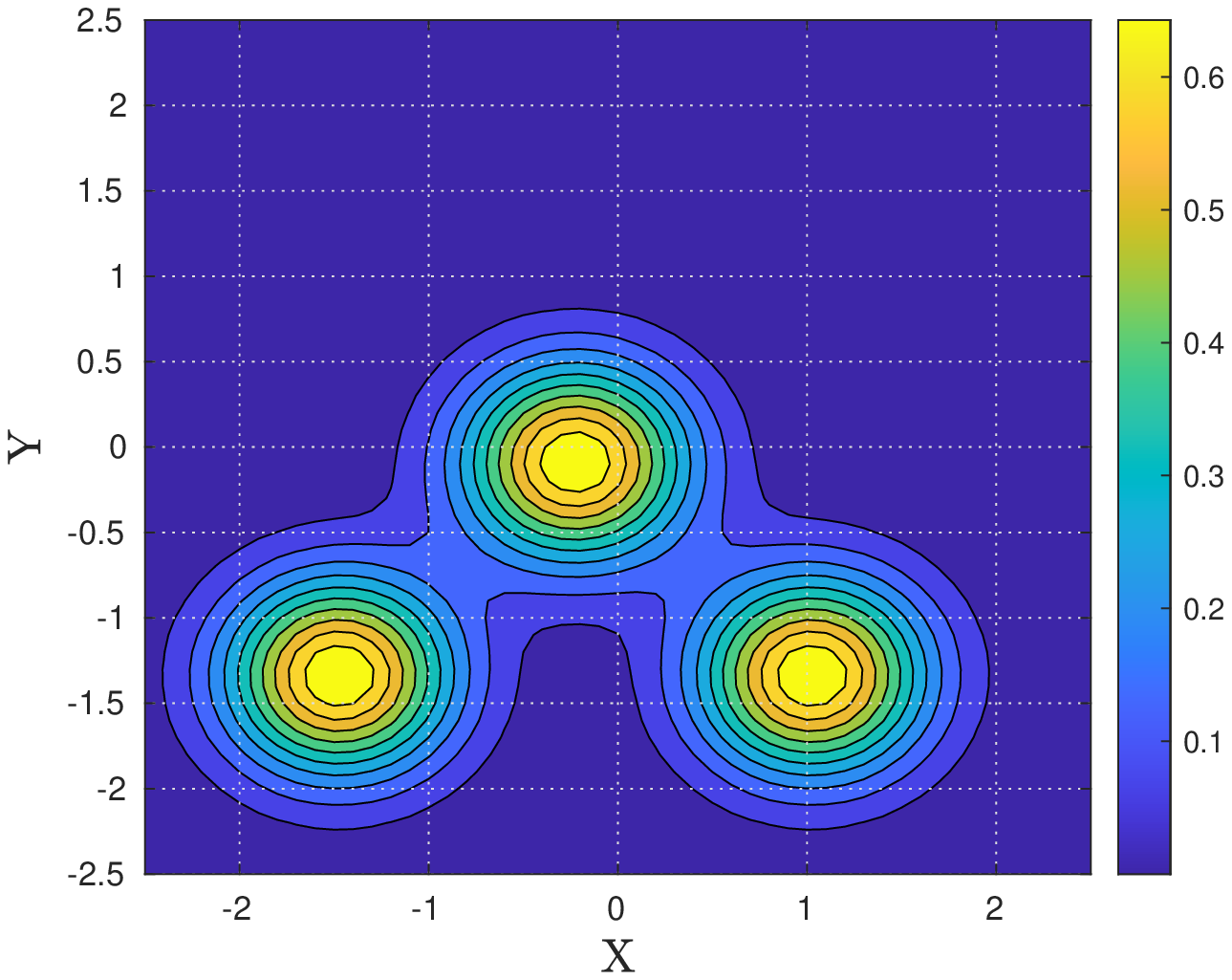}
    \caption{}
    \label{fig:GM_PHD_Inetnsity_agent_3}
\end{subfigure}
\caption{Multi-target tracking by robot 3. (a) Estimated ($*$) and true ($+$) target positions. (b) GM-PHD intensities computed from Equation \eqref{eqn:update_intensity_GM_PHD}.}
\end{figure}

\begin{table}[H]
    \centering
     \begin{tabular}{|c |c |c |c |}
     \hline
     Scenarios & 1 & 2 & 3 \\
     \hline
     Mean inter-arrival time (s) & 20 & 190 & 430 \\
     Mean reward ($\%$) & 10 & 33 & 65 \\
     \hline
     \end{tabular}
     \caption{Mean inter-arrival time and mean reward percentage over 100 simulation runs each for 3 scenarios.} 
     \label{tab:MonteCarloSims}
\end{table}
\section{Conclusion and Future Work}\label{sec:Conc}

In this paper, we demonstrated theoretically that a group of robots equipped with limited sensing and communication capabilities, moving according to a DTDS Markov chain model on a spatial grid, is able to detect and track the number and states of multiple stationary targets in the environment using the Gaussian Mixture formulation of the PHD filter from the RFS framework. We verified our results with numerical simulations in MATLAB. 
In the future, we plan to implement this strategy on quadrotors equipped with RGBD cameras and 5G WiFi dongles for exchanging data between the robots.

\section{Acknowledgments}
This work was supported by the Arizona State University Global Security Initiative. The authors utilized {HPC} resources provided by Research Computing at Arizona State University to generate results reported in this paper.

\bibliography{main}

\end{document}